\documentclass[11pt]{article}
\usepackage[
    letterpaper,
    top=0.75in,
    bottom=1in,
    left=0.625in,
    right=0.625in,
    heightrounded ]{geometry}
\usepackage{amsmath,amssymb,amsfonts,amsthm}
\usepackage{algorithmic}
\usepackage{graphicx}
\usepackage{algorithm,algorithmic}
\usepackage{hyperref}
\hypersetup{hidelinks=true}
\usepackage{textcomp}
\usepackage{booktabs}
\usepackage{threeparttable}
\usepackage{multirow}
\usepackage{makecell}
\usepackage{siunitx}

\usepackage{makecell}
\newtheorem{proposition}{Proposition}
\def\b{\ensuremath\boldsymbol}

\usepackage[blocks]{authblk}

\title{\textbf{LoRSA: \\Toward Generalizable Parameter-Efficient Fine-Tuning for Biomedical Downstream Tasks}}

\author[1,2]{\small\textbf{Saed Moradi}}
\author[3]{\textbf{Benyamin Ghojogh}\thanks{These authors contributed equally to this work.}}
\author[2]{\textbf{ M. Hadi Sepanj}\protect\footnotemark[1]}
\author[4]{\textbf{Yimin Yang}}
\author[1]{\textbf{Ashirbani Saha}}

\affil[1]{Department of Oncology, McMaster University, Hamilton, Canada}
\affil[2]{ Department of Systems Design Engineering, University of Waterloo, Waterloo, Canada}
\affil[3]{Department of Electrical and Computer Engineering, University of Waterloo, Waterloo, Canada.}
\affil[4]{Department of Electrical and Computer Engineering, Western University, London, Canada}

\date{}

\newcommand{\keywords}[1]{\noindent\textbf{Keywords:} #1}
\begin{document}

\maketitle

\begin{abstract}
Parameter-efficient fine-tuning enables the adaptation of vision
foundation models to biomedical tasks under limited computational
resources, but a single low-rank update can constrain all task-specific
changes to one narrow parameter subspace. This restriction may prevent
the model from simultaneously representing globally shared task
structure and localized residual directions required for generalization
to unseen imaging domains. We introduce LoRSA, a global--residual
adaptation framework that jointly learns a dense low-rank component
and a dynamically structured-sparse low-rank component. The dense
component captures globally coordinated task adaptation, while the
structured component provides complementary residual corrections whose
support evolves during training. We characterize the representational
capacity, approximation properties, rank structure, and singular-subspace
complementarity of this decomposition. We evaluate LoRSA for four-class
breast-density classification using DINOv3-Base, with VinDr-Mammo as
the source domain and MammosighTR and RSNA as unseen external domains.
LoRSA remains competitive on the internal validation set and achieves
the best external macro-F1 on both target datasets, improving upon the
strongest competing method by 2.15 percentage points on MammosighTR
and 3.09 percentage points on RSNA. Weight-matrix analysis further
shows that approximately $92\%$ of the energy of each adaptation
component lies outside the bilateral singular subspace of the other,
indicating that the two components learn largely complementary update
directions. These results suggest that organizing adaptation capacity
into distinct global and residual paths can improve the external-domain
generalization of parameter-efficiently adapted biomedical vision
models.
\end{abstract}

\vspace{0.5em}

\keywords{foundation models, low-rank adaptation, structured-sparse adaptation, parameter-efficient fine-tuning, mammogram density classification, domain generation, domain adaptation}

\vspace{2em}

\section{Introduction}

Large pre-trained Vision Foundation Models (VFMs) \cite{caron2021emerging,kirillov2023segment,radford2021learning} have led to great success in computer vision. However, fine-tuning these models  specifically for medical image analysis is challenging because downstream datasets are often small and annotated target data is limited \cite{tajbakhsh2016convolutional,zhou2021review}. 
Furthermore, external test cohorts frequently differ from the training distribution due to variations in scanner vendor, acquisition protocol, preprocessing pipelines, and patient demographics, a phenomenon known as domain shift \cite{castro2020causality,guan2021domain}. 
Additionally, model evaluation and convergence can be hindered by inter-observer variability and potential subjectivity in ground truth labels \cite{shi2024survey}. 
 In a specific downstream task, these factors create a practical domain-generalization problem: 
a VFM adapted on a small dataset should learn the essential  task-specific representation that remains reliable on unseen external datasets while preserving the transferable knowledge of the pre-trained VFM \cite{fu2023effectiveness}.

Parameter-efficient fine-tuning (PEFT) \cite{xu2026parameter,xin2026parameter,wang2025parameter} emerged to address the main challenge of resource limitation in full fine-tuning of VFMs. However, a key difficulty is that downstream  PEFT can unintentionally reduce, rather than exploit, the broad representational capacity of the adapted VFMs \cite{ni2024pace,nguyen2024saft,shu2023clipood}. Recent geometry-aware domain-generalization work observes that when VFMs are fine-tuned for specific tasks, their rich pre-trained expressive capacity is often compressed into narrow, domain-specific subspaces. This compression causes the adapted VFM to overfit to the statistics and decision boundaries of the downstream task’s source domain, producing domain bias (poor transfer under appearance or acquisition changes) \cite{zang2026geco}. This prohibits the learning of a generalized representation of the downstream task and limits its performance on the unseen target domain (external datasets) related to the downstream task. This observation is central to our problem setting: Biomedical image analysis, breast density classification in mammograms as a case study in this work, requires task-specific adaptation without collapsing the broad pretrained representation into a narrow, source-domain-specific subspace.

Low-rank PEFT methods \cite{hu2022lora,zhang2023adalora,hayou2024lora+,liu2024dora,hu2023structure}  are particularly effective when downstream adaptation can be expressed within a compact parameter subspace, especially when part of this subspace is reusable across layers or model components \cite{renduchintala2024tied,kaushik2025universal}. In biomedical imaging, such shared structure may arise from recurring anatomical and morphological patterns across datasets, enabling generalist models to transfer representations across imaging domains, anatomical targets, and clinical tasks \cite{ouyang2024towards,ma2024segment}. However, external-domain generalization may require more than a single low-rank adaptation. Such a   purely low-rank subspace may not be global enough to capture an exhaustive set of all localized, rare,  and domain-sensitive residual directions which can be induced by  potential domain shifts. While adapting based on a single training dataset, the residual directions are forced into the low-rank subspace, which limits the adapted VFM to source-specific artifacts, thereby increasing the risk of training dataset bias and overfitting.


This motivates the central problem considered in this work: how can we learn an adaptation that captures both the global low-dimensional structure of the downstream task and the  residual variations  needed for external generalization, without collapsing the VFM representation into a narrow source-specific subspace for the biomedical image analysis down-stream task? 

The main contributions of this work are twofold:
\begin{enumerate}
    \item We identify a key limitation of low-rank-only adaptation for domain generalization. A single low-rank update can constrain adaptation to a limited subspace and fail to capture structured residual directions needed for transfer to unseen medical imaging domains. Our theoretical and empirical results show that neither increasing rank nor sparsifying the same low-rank update fully resolves this limitation.
    \item We introduce a complementary global–residual adaptation framework. The proposed approach jointly learns a dense low-rank component for global task adaptation and a dynamically structured residual component for complementary directions outside the dominant low-rank subspace. We characterize its approximation capacity, spectral structure, and subspace complementarity, and validate these properties through matrix weight-level analysis and evaluation on two external datasets. To the best of our knowledge, this is the first theoretically motivated and empirically supported weight-space study of how PEFT structure affects out-of-distribution generalization in medical imaging.
\end{enumerate}

\section{Background}

PEFT methods have become the dominant strategy for adapting large pre-trained models to downstream tasks without updating the full parameter set. Among them, low-rank adaptation (LoRA) and its variants have become prominent PEFT techniques.

\subsection{LoRA}

Among PEFT methods, Low-Rank Adaptation (LoRA) is one of the most influential and widely adopted approaches because it provides a simple, mergeable, and architecture-preserving mechanism for task adaptation \cite{hu2022lora}. Given a frozen pre-trained weight matrix $\b{W}_0 \in \mathbb{R}^{d_{\mathrm{out}} \times d_{\mathrm{in}}}$, LoRA constrains the task-specific update $\Delta \b{W}$ to a low-rank factorization:
\begin{align}
    &\b{W} = \b{W}_0 + \Delta \b{W},
    \quad
    \Delta \b{W} = \b{BA}, \label{eq:lora_update}
    \\
    &\b{B} \in \mathbb{R}^{d_{\mathrm{out}} \times r},\;
    \b{A} \in \mathbb{R}^{r \times d_{\mathrm{in}}},
    \;
    r \ll \min(d_{\mathrm{out}}, d_{\mathrm{in}}).
\end{align}
For an input representation $x$, the adapted forward pass becomes:
\begin{equation}
    \b{h} = \b{W}_0\b{x} + \frac{\alpha}{r}\b{BAx},
    \label{eq:lora_forward}
\end{equation}
where $\alpha$ is a scaling factor and only the low-rank factors $\b{A}$ and $\b{B}$ are trained while $\b{W}_0$ remains frozen. This reduces the number of trainable parameters from $d_{\mathrm{out}}d_{\mathrm{in}}$ to $r(d_{\mathrm{out}}+d_{\mathrm{in}})$ for each adapted layer, while allowing the update to be merged into the base weight at inference time. 
The intuition behind LoRA is that full fine-tuning gives every entry of $\b{W}_0$ a degree of freedom, whereas LoRA forces the update to pass through a narrow $r$-dimensional bottleneck. In this view, $\b{A}$ first projects the input into a compact task-specific subspace, and $\b{B}$ then lifts this low-dimensional correction back to the original feature dimension. Despite its efficiency, LoRA also  introduces an important limitation. The low-rank-only assumption can be too restrictive for complex downstream adaptation.

\subsection{RoseLoRA}

RoseLoRA extends the LoRA family by introducing sparsity into the effective LoRA update \cite{wang2024roselora}. The motivation is that some downstream tasks, especially knowledge editing, require precise modifications to a limited subset of model parameters rather than dense changes to the entire weight matrix. Starting from the standard LoRA parameterization in Eq.~\eqref{eq:lora_update}, RoseLoRA aims to constrain the product $\b{BA}$ itself to be sparse:
\begin{equation}
    \min_{\b{A},\b{B}} \; \mathcal{L}(\mathcal{D}; \b{W}_0 + \b{BA})
    \quad
    \mathrm{s.t.}
    \quad
    \frac{\|\b{BA}\|_0}{d_{\mathrm{out}}d_{\mathrm{in}}} \leq \tau,
    \label{eq:roselora_product_constraint}
\end{equation}
where $\tau$ controls the allowed density of the effective update. The main challenge in RoseLoRA is that while the sparsity constraint on the weight updates might be useful for applications like knowledge editing, it is still a low-rank adaptation and is too simplistic for complex downstream tasks.

\subsection{RoSA}

RoSA takes a different direction by explicitly combining low-rank and sparse adaptation in parallel \cite{nikdan2024rosa}. Its motivation comes from Robust Principal Component Analysis (RPCA) \cite{candes2011robust}, where a matrix is decomposed into a low-rank component and a sparse component. RoSA applies this idea to PEFT by assuming that the full fine-tuning update can be better approximated as:
\begin{equation}
    \Delta \b{W} \approx \Delta \b{W}_L + \Delta \b{W}_S,
    \label{eq:rosa_decomp_general}
\end{equation}
where $\Delta \b{W}_L$ captures the dominant low-rank adaptation and $\Delta \b{W}_S$ captures sparse residual directions. For each adapted weight matrix, RoSA uses:
\begin{equation}
    \Delta \b{W}_L = \b{BA},
    \quad
    \Delta \b{W}_S = \b{M} \odot \b{S},
    \label{eq:rosa_update}
\end{equation}
where $\b{M} \in \{0,1\}^{d_{\mathrm{out}} \times d_{\mathrm{in}}}$ is a binary sparsity mask, $\b{S}$ contains the trainable sparse values, and $\odot$ denotes Hadamard product. The corresponding optimization can be written as:
\begin{equation}
    \min_{\b{A},\b{B},\b{S}} \; \mathcal{L}\left(\mathcal{D}; \b{W}_0 + \b{BA} + \b{M} \odot \b{S}\right),
    \label{eq:rosa_objective}
\end{equation}
with $\b{W}_0$ frozen and only the low-rank factors and sparse values trained.
While RoSA has  the key advantage of  preserving the strengths of LoRA while addressing the low-rank bottleneck, the sparse mask $\b{M}$ is generated using a small calibration or mask-generation subset before the main adapter training. In other words, the mask $\b{M}$ for the location of sparse values is fixed during training. 

\section{LoRSA: Low-Rank and Structured-Sparse Adaptation}

\subsection{Methodology}

Let $\b{W}_0 \in \mathbb{R}^{d_{\mathrm{out}} \times d_{\mathrm{in}}}$
denote a pretrained weight matrix. Standard parameter-efficient fine-tuning
constructs an adapted weight matrix as:
\begin{align}
    \b{W} = \b{W}_0 + \Delta \b{W},
\end{align}
where $\Delta \b{W}$ is the task-specific parameter update.

Motivated by the low-rank-plus-sparse decomposition in RPCA \cite{candes2011robust}, we model the task-specific update as:
\begin{align}
    \Delta \b{W} = \b{L} + \b{S},
    \label{equation_low_rank_sparse_decomposition}
\end{align}
where $\b{L}$ is a dense low-rank component and $\b{S}$ is a structured sparse low-rank component. 

Different approaches can be used to implement the low-rank component and structured sparse component. 
For example, these components can be parameterized using LoRA \cite{hu2022lora} and RoseLoRA \cite{wang2024roselora}, respectively:
\begin{align}
    \b{L}
    &:=
    \frac{\alpha_l}{r_l}\b{B}_l\b{A}_l,
    \label{eq:lora_component}\\
    \b{S}
    &:=
    \frac{\alpha_s}{r_s}\b{B}_s\b{A}_s,
    \label{eq:roselora_component}
\end{align}
where $r_l, r_s \ll \min\{d_{\mathrm{in}}, d_{\mathrm{out}}\}$ are the ranks of the $\b{L}$ and $\b{S}$ components, and the dimensionalities of the matrices are:
\begin{align*}
    &\b{A}_l
    \in
    \mathbb{R}^{r_l \times d_{\mathrm{in}}},
    \quad
    \b{B}_l
    \in
    \mathbb{R}^{d_{\mathrm{out}} \times r_l},\\
    &\b{A}_s
    \in
    \mathbb{R}^{r_s \times d_{\mathrm{in}}},
    \quad
    \b{B}_s
    \in
    \mathbb{R}^{d_{\mathrm{out}} \times r_s}.
\end{align*}
The resulting adapted layer is therefore:
\begin{align}
    \b{W}
    &=
    \b{W}_0
    +
    \frac{\alpha_l}{r_l}\b{B}_l\b{A}_l
    +
    \frac{\alpha_s}{r_s}\b{B}_s\b{A}_s.
    \label{eq:combined_adapter}
\end{align}

Note that, in contrast to RoSA, the locations of the nonzero entries in the structured sparse component of LoRSA are not fixed during training. 

\subsection{Discussion on the Rank and Sparsity in LoRSA}

The low-rank component satisfies:
\begin{align}
    \operatorname{rank}(\b{L}) \leq r_l,
\end{align}
and is generally dense. It can therefore represent globally coordinated changes across many input and output dimensions. 

In contrast, the structured sparse component imposes row-wise sparsity on $\b{A}_s$ and column-wise sparsity on
$\b{B}_s$. For every $k \in \{1,\ldots,r_s\}$, we impose:
\begin{align}
    &\left\|(\b{A}_s)_{k:}\right\|_0
    \leq
    q_a d_{\mathrm{in}}, \\
    &\left\|(\b{B}_s)_{:k}\right\|_0
    \leq
    q_b d_{\mathrm{out}},
\end{align}
where $q_a,q_b \in [0,1]$ denote the fractions of entries retained in
each row and column, respectively.

If every row of $\b{A}_s$ retains at most a fraction $q_a$ of its entries
and every column of $\b{B}_s$ retains at most a fraction $q_b$, then each
rank-one component contains at most $q_aq_b d_{\mathrm{in}}d_{\mathrm{out}}$ nonzero entries. By the union bound, we have:
\begin{align}
    \frac{\|\b{S}\|_0}
    {d_{\mathrm{in}}d_{\mathrm{out}}}
    &\leq
    \min\left\{1,r_sq_aq_b\right\}.
    \label{eq:roselora_density_bound}
\end{align}
Equivalently, by defining the zero sparsity of a matrix $\b{X}$ as:
\begin{align*}
    s(\b{X})
    &:=
    1-
    \frac{\|\b{X}\|_0}{\#\b{X}},
\end{align*}
where $\#\b{X}$ denotes the number of elements in matrix $\b{X}$, we obtain:
\begin{align}
    s(\b{S})
    &\geq
    \max\left\{0,1-r_sq_aq_b\right\}.
    \label{eq:roselora_sparsity_bound}
\end{align}

Unlike the sparse term in RoSA \cite{nikdan2024rosa} and classical RPCA \cite{candes2011robust}, the structured sparse component is also
low rank:
\begin{align}
    \operatorname{rank}(\b{S})
    &\leq r_s.
    \label{eq:sparse_component_rank}
\end{align}
Overall, the effective rank of LoRSA is upper-bounded by:
\begin{align}
\operatorname{rank}(\b{L}+\b{S})
    &\leq
    r_l+r_s.
\end{align}

\subsection{Task-Update Hypothesis and Approximation Advantage}

Let $\Delta \b{W}^{\star}$ denote an ideal task-specific parameter update.
We hypothesize that it admits the approximate decomposition:
\begin{align}
    \Delta \b{W}^{\star}
    &=
    \b{L}^{\star}+\b{S}^{\star}+\b{E}^{\star},
    \label{eq:ideal_update_decomposition}
\end{align}
where $\b{L}^{\star}$ contains the dominant, globally coordinated
adaptation, $\b{S}^{\star}$ contains localized residual corrections, and
$\b{E}^{\star}$ represents variation that is not captured by either
component.

Under this interpretation, the low-rank component is intended to capture the dominant low-dimensional structure of the adaptation. However, a fixed-rank LoRA update may not efficiently represent localized or lower-energy directions in the task-specific update. 
The structured sparse component enlarges the adaptation space in a controlled manner and can capture structured residual directions that remain after fitting the dense low-rank component.

To formalize this argument, we define the LoRA hypothesis class:
\begin{align}
    \mathcal{L}_{r_l}
    &:=
    \left\{
    \b{L}
    \in
    \mathbb{R}^{d_{\mathrm{out}}\times d_{\mathrm{in}}}
    :
    \operatorname{rank}(\b{L})\leq r_l
    \right\},
    \label{eq:lora_class}
\end{align}
and we define the structured sparse low-rank class:
\begin{equation}
\begin{aligned}
    &\mathcal{S}_{r_s,q_a,q_b}
    :=\\
    &\quad\left\{
    \b{B}_s\b{A}_s \Bigg|
    \begin{array}{l}
    \operatorname{rank}(\b{B}_s\b{A}_s)\leq r_s,\\
    \|(\b{A}_s)_{k:}\|_0
    \leq q_a d_{\mathrm{in}},\\
    \|(\b{B}_s)_{:k}\|_0
    \leq q_b d_{\mathrm{out}},
    \quad
    k=1,\ldots,r_s
    \end{array}
    \right\}.
    \label{eq:roselora_class}
\end{aligned}
\end{equation}
The combined hypothesis class is the Minkowski sum:
\begin{align}
    \mathcal{H}_{l+s}
    &=
    \mathcal{L}_{r_l}
    +
    \mathcal{S}_{r_s,q_a,q_b}
    \nonumber\\
    &=
    \left\{
    \b{L}+\b{S}:
    \b{L}\in\mathcal{L}_{r_l},
    \;
    \b{S}\in\mathcal{S}_{r_s,q_a,q_b}
    \right\}.
    \label{eq:combined_class}
\end{align}

\begin{proposition}[Hypothesis-class inclusion]
\label{prop:hypothesis_inclusion}
Assume that the zero matrix belongs to both $\mathcal{L}_{r_l}$ and
$\mathcal{S}_{r_s,q_a,q_b}$. Then:
\begin{equation}
\begin{aligned}
    &\mathcal{L}_{r_l}
    \subseteq
    \mathcal{H}_{l+s}, \\
    &\mathcal{S}_{r_s,q_a,q_b}
    \subseteq
    \mathcal{H}_{l+s}.
\end{aligned}
\end{equation}
\end{proposition}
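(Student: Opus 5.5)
The plan is to argue directly from the definition of the Minkowski sum in \eqref{eq:combined_class}, using the zero matrix as the padding element in each summand. Since $\mathcal{H}_{l+s}$ consists of all matrices $\b{L}+\b{S}$ with $\b{L}\in\mathcal{L}_{r_l}$ and $\b{S}\in\mathcal{S}_{r_s,q_a,q_b}$, each inclusion reduces to exhibiting, for an arbitrary element of one class, a decomposition whose other summand is the zero matrix.

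For the first inclusion, fix any $\b{L}\in\mathcal{L}_{r_l}$. By hypothesis $\b{0}\in\mathcal{S}_{r_s,q_a,q_b}$, so I will write $\b{L}=\b{L}+\b{0}$. This is a sum of an element of $\mathcal{L}_{r_l}$ and an element of $\mathcal{S}_{r_s,q_a,q_b}$, hence $\b{L}\in\mathcal{H}_{l+s}$. Symmetrically, for any $\b{S}\in\mathcal{S}_{r_s,q_a,q_b}$, I will use $\b{0}\in\mathcal{L}_{r_l}$ and write $\b{S}=\b{0}+\b{S}\in\mathcal{H}_{l+s}$. Since $\b{L}$ and $\b{S}$ were arbitrary, both inclusions follow.

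The only point needing any care is the membership assumption itself, which the statement takes as given. For completeness I would remark that it is in fact automatic. For $\mathcal{L}_{r_l}$ this holds because $\operatorname{rank}(\b{0})=0\le r_l$. For $\mathcal{S}_{r_s,q_a,q_b}$ it holds by choosing $\b{A}_s=\b{0}$ and $\b{B}_s=\b{0}$, whose rows and columns have zero nonzero entries and so trivially satisfy the $q_a d_{\mathrm{in}}$ and $q_b d_{\mathrm{out}}$ support bounds, with $\b{B}_s\b{A}_s=\b{0}$ of rank $0\le r_s$. There is no genuine obstacle here; the argument is a one-line consequence of the definitions.
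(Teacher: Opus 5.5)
Your proof is correct and is exactly the paper's argument: take $\b{S}=\b{0}$ to place any $\b{L}\in\mathcal{L}_{r_l}$ in $\mathcal{H}_{l+s}$, and take $\b{L}=\b{0}$ to place any $\b{S}\in\mathcal{S}_{r_s,q_a,q_b}$ there. Your side remark that the zero-membership hypothesis holds automatically is also correct, since $\operatorname{rank}(\b{0})=0$ and $\b{A}_s=\b{0}$, $\b{B}_s=\b{0}$ satisfy the support bounds because $q_a,q_b\geq 0$; this shows the assumption in the statement is redundant.
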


\begin{proof}
For every $\b{L}\in\mathcal{L}_{r_l}$, selecting $\b{S}=\b{0}$ gives
$\b{L}=\b{L}+\b{0}\in\mathcal{H}_{l+s}$. Similarly, for every
$\b{S}\in\mathcal{S}_{r_s,q_a,q_b}$, selecting $\b{L}=\b{0}$ gives
$\b{S}=\b{0}+\b{S}\in\mathcal{H}_{l+s}$.
\end{proof}

The previous proposition shows that the combined adapter has at least the
representational capacity of either component used alone.

\begin{proposition}[Approximation advantage]
\label{prop:approximation_advantage}
For an arbitrary target update $\Delta \b{W}^{\star}$, define:
\begin{align}
    &\varepsilon_l
    :=
    \inf_{\b{L}\in\mathcal{L}_{r_l}}
    \|\Delta \b{W}^{\star}-\b{L}\|_F,
    \label{eq:lora_approximation_error}\\
    &\varepsilon_s
    :=
    \inf_{\b{S}\in\mathcal{S}_{r_s,q_a,q_b}}
    \|\Delta \b{W}^{\star}-\b{S}\|_F,
    \label{eq:rose_approximation_error}\\
    &\varepsilon_{l+s}
    :=
    \inf_{\substack{
    \b{L}\in\mathcal{L}_{r_l}\\
    \b{S}\in\mathcal{S}_{r_s,q_a,q_b}
    }}
    \|\Delta \b{W}^{\star}-\b{L}-\b{S}\|_F,
    \label{eq:combined_approximation_error}
\end{align}
where $\|.\|_F$ denotes the Frobenius norm.
Then:
\begin{align}
    \varepsilon_{l+s}
    &\leq
    \min\{\varepsilon_l,\varepsilon_s\}.
    \label{eq:approximation_error_bound}
\end{align}
\end{proposition}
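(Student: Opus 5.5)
The plan is to reduce the statement to Proposition~\ref{prop:hypothesis_inclusion}: an infimum taken over a larger set can only be smaller. I will show $\varepsilon_{l+s}\le\varepsilon_l$ and $\varepsilon_{l+s}\le\varepsilon_s$ separately and then take the minimum.

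First I will check that the zero matrix lies in both classes, so that Proposition~\ref{prop:hypothesis_inclusion} applies without extra assumptions. For $\mathcal{L}_{r_l}$ this holds because $\operatorname{rank}(\b{0})=0\le r_l$. For $\mathcal{S}_{r_s,q_a,q_b}$ I take $\b{B}_s=\b{0}$ and $\b{A}_s=\b{0}$. Every row and column then has zero nonzeros, which satisfies the sparsity budgets for any $q_a,q_b\in[0,1]$, and the rank constraint holds trivially.

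Next I fix an arbitrary $\b{L}\in\mathcal{L}_{r_l}$ and pair it with $\b{S}=\b{0}\in\mathcal{S}_{r_s,q_a,q_b}$. The pair $(\b{L},\b{0})$ is feasible in \eqref{eq:combined_approximation_error}, so
\[
\varepsilon_{l+s}\le\|\Delta\b{W}^{\star}-\b{L}-\b{0}\|_F=\|\Delta\b{W}^{\star}-\b{L}\|_F .
\]
This holds for every such $\b{L}$, so $\varepsilon_{l+s}$ is a lower bound of the set of values whose infimum is $\varepsilon_l$. By the definition of infimum as the greatest lower bound, $\varepsilon_{l+s}\le\varepsilon_l$. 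The symmetric argument with $\b{L}=\b{0}$ and arbitrary $\b{S}$ gives $\varepsilon_{l+s}\le\varepsilon_s$. Combining the two inequalities yields \eqref{eq:approximation_error_bound}.

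There is no real obstacle. The only point needing care is that the infima need not be attained; $\mathcal{S}_{r_s,q_a,q_b}$ in particular is a finite union of sets that may fail to be closed. I avoid this by working with lower bounds directly rather than with minimizers. All quantities are nonnegative reals, so every infimum exists and no approximation argument is needed.
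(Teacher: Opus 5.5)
Your proposal is correct and follows essentially the same route as the paper: the choices $\b{S}=\b{0}$ and $\b{L}=\b{0}$ are feasible in the joint infimum, which gives $\varepsilon_{l+s}\le\varepsilon_l$ and $\varepsilon_{l+s}\le\varepsilon_s$ separately. You also check explicitly that $\b{0}$ lies in both classes and spell out the greatest-lower-bound step, both of which the paper leaves implicit.
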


\begin{proof}
The choice $\b{S}=\b{0}$ is feasible for the combined class. Therefore,
\begin{align*}
    \varepsilon_{l+s}
    &\leq
    \inf_{\b{L}\in\mathcal{L}_{r_l}}
    \|\Delta \b{W}^{\star}-\b{L}\|_F\\
    &=
    \varepsilon_l.
\end{align*}
Likewise, the choice $\b{L}=\b{0}$ is feasible, and hence:
\begin{align*}
    \varepsilon_{l+s}
    &\leq
    \inf_{\b{S}\in\mathcal{S}_{r_s,q_a,q_b}}
    \|\Delta \b{W}^{\star}-\b{S}\|_F\\
    &=
    \varepsilon_s.
\end{align*}
Combining the two inequalities proves
Eq.~\eqref{eq:approximation_error_bound}.
\end{proof}

The bound in Proposition~\ref{prop:approximation_advantage} concerns the best achievable approximation within each hypothesis class. It does not by itself guarantee that a nonconvex training procedure reaches the global optimum, nor does it guarantee lower test error. Nevertheless, it establishes that the combined parameterization can represent every solution available to LoRA alone or RoseLoRA alone, in addition to updates requiring both structures.

A more detailed interpretation can be obtained by considering the residual of the best low-rank approximation. Let:
\begin{align}
    \b{L}_{r_l}^{\star}
    &\in
    \arg\min_{\b{L}\in\mathcal{L}_{r_l}}
    \|\Delta \b{W}^{\star}-\b{L}\|_F^2,
    \label{eq:best_low_rank_update}
\end{align}
and define the remaining residual as:
\begin{align}
    \b{R}^{\star}
    &:=
    \Delta \b{W}^{\star}-\b{L}_{r_l}^{\star}.
    \label{eq:lora_residual}
\end{align}
The LoRA-only approximation error is:
\begin{align}
    \varepsilon_l^2
    &=
    \|\b{R}^{\star}\|_F^2.
    \label{eq:lora_residual_error}
\end{align}
Now let:
\begin{align}
    \b{S}_{\mathrm{Rose}}^{\star}
    &\in
    \arg\min_{\b{S}\in\mathcal{S}_{r_s,q_a,q_b}}
    \|\b{R}^{\star}-\b{S}\|_F^2.
    \label{eq:best_sparse_residual}
\end{align}
The corresponding combined approximation error is:
\begin{align}
    \tilde{\varepsilon}_{l+s}^2
    &:=
    \|\b{R}^{\star}-\b{S}_{\mathrm{Rose}}^{\star}\|_F^2,
    \label{eq:combined_residual_error}
\end{align}
where:
\begin{align}\label{equation_e_e_e}
\varepsilon_{l+s}^2 \leq \tilde{\varepsilon}_{l+s}^2 \leq \varepsilon_{l}^2.
\end{align}
The first inequality in Eq. (\ref{equation_e_e_e}) holds because the jointly optimized problem can select $\b{L} = \b{L}_{r_l}^{\star}$ and $\b{S} = \b{S}_{\mathrm{Rose}}^{\star}$. The second inequality in this equation holds because $\b{S}=\b{0}$ is feasible in the structured-sparse class. 

Expanding the difference between the LoRA-only error and the sequential combined error gives:
\begin{align}
    \varepsilon_l^2-\tilde{\varepsilon}_{l+s}^2
    &=
    \|\b{R}^{\star}\|_F^2
    -
    \|\b{R}^{\star}-\b{S}_{\mathrm{Rose}}^{\star}\|_F^2
    \nonumber\\
    &=
    2
    \left\langle
    \b{R}^{\star},
    \b{S}_{\mathrm{Rose}}^{\star}
    \right\rangle_F
    -
    \left\|
    \b{S}_{\mathrm{Rose}}^{\star}
    \right\|_F^2,
    \label{eq:improvement_identity}
\end{align}
where $\langle \cdot, \cdot \rangle_F$ denotes the inner product of the matrices under the Frobenius norm. 
Therefore, the sequential structured-sparse correction strictly improves upon the best low-rank-only approximation whenever:
\begin{align}
    2
    \left\langle
    \b{R}^{\star},
    \b{S}_{\mathrm{Rose}}^{\star}
    \right\rangle_F
    &>
    \left\|
    \b{S}_{\mathrm{Rose}}^{\star}
    \right\|_F^2.
    \label{eq:strict_improvement_condition}
\end{align}
This condition states that the learned structured sparse update must be
sufficiently aligned with the residual left by the dense low-rank update.

\subsection{Singular-Subspace Expansion and Component Complementarity}
\label{sec:singular_subspace_analysis}

The decomposition of LoRSA provides an interpretation in terms of the singular subspaces accessible to the task-specific update. Let the compact singular value decompositions of the two adaptation components be:
\begin{align}
    \b{L}
    &=
    \b{U}_l
    \b{\Sigma}_l
    \b{V}_l^{\top},
    \label{eq:svd_l}\\
    \b{S}
    &=
    \b{U}_s
    \b{\Sigma}_s
    \b{V}_s^{\top},
    \label{eq:svd_s}
\end{align}
where the columns of $\b{U}_l$ and $\b{U}_s$ span the output-side
singular subspaces, while the columns of $\b{V}_l$ and $\b{V}_s$ span
the input-side singular subspaces.

For the LoRSA update:
\begin{align*}
    \Delta\b{W}
    &=
    \b{L}+\b{S},
\end{align*}
its column and row spaces satisfy:
\begin{align}
    \operatorname{col}(\Delta\b{W})
    &\subseteq
    \operatorname{col}(\b{L})
    +
    \operatorname{col}(\b{S}),
    \label{eq:column_space_inclusion}\\
    \operatorname{row}(\Delta\b{W})
    &\subseteq
    \operatorname{row}(\b{L})
    +
    \operatorname{row}(\b{S}).
    \label{eq:row_space_inclusion}
\end{align}
Consequently:
\begin{align}
    \operatorname{rank}(\Delta\b{W})
    &\leq
    \operatorname{rank}(\b{L})
    +
    \operatorname{rank}(\b{S})
    \nonumber\\
    &\leq
    r_l+r_s.
    \label{eq:LoRSA_rank_upper_bound}
\end{align}

Equation~\eqref{eq:LoRSA_rank_upper_bound} shows that LoRSA may access a
larger adaptation subspace than LoRA or RoseLoRA alone. However, the
increase is not automatic. If the two components have strongly overlapping
singular subspaces, or if they partially cancel each other, the rank of
their sum may be considerably smaller than $r_l+r_s$.

The dimension of the combined output-side subspace is:
\begin{equation}
\begin{aligned}
    &\dim\left(
    \operatorname{col}(\b{L})
    +
    \operatorname{col}(\b{S})
    \right)
    \\
    &\qquad=
    \operatorname{rank}(\b{L})
    +
    \operatorname{rank}(\b{S})
    -
    \dim\left(
    \operatorname{col}(\b{L})
    \cap
    \operatorname{col}(\b{S})
    \right).
    \label{eq:subspace_sum_dimension}
\end{aligned}
\end{equation}
An analogous identity holds for the row spaces. Therefore, LoRSA enlarges the accessible adaptation space when the structured-sparse component introduces directions that are not already contained in the dense low-rank component.

\begin{proposition}[Rank additivity under subspace orthogonality]
\label{prop:rank_additivity}
Suppose that the row and column singular subspaces of $\b{L}$ and $\b{S}$
are mutually orthogonal:
\begin{align}
    &\b{U}_l^{\top}\b{U}_s
    =
    \b{0},
    \quad
    \b{V}_l^{\top}\b{V}_s
    =
    \b{0}.
    \label{eq:two_sided_orthogonality}
\end{align}
Then:
\begin{align}
    \operatorname{rank}(\b{L}+\b{S})
    &=
    \operatorname{rank}(\b{L})
    +
    \operatorname{rank}(\b{S}).
    \label{eq:orthogonal_rank_sum}
\end{align}
Moreover, the nonzero singular values of $\b{L}+\b{S}$ are the union of
the nonzero singular values of $\b{L}$ and $\b{S}$.
\end{proposition}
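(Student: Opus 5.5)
The plan is to write down an explicit singular value decomposition of $\b{L}+\b{S}$ built directly from the compact SVDs in Eqs.~\eqref{eq:svd_l} and \eqref{eq:svd_s}. Concatenate the factors:
\begin{align*}
    \b{U} := [\b{U}_l,\ \b{U}_s], \quad
    \b{V} := [\b{V}_l,\ \b{V}_s], \quad
    \b{\Sigma} := \operatorname{diag}(\b{\Sigma}_l, \b{\Sigma}_s).
\end{align*}
Block multiplication gives $\b{U}\b{\Sigma}\b{V}^{\top} = \b{U}_l\b{\Sigma}_l\b{V}_l^{\top} + \b{U}_s\b{\Sigma}_s\b{V}_s^{\top} = \b{L}+\b{S}$. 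This identity holds with no assumptions.

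The orthogonality hypothesis Eq.~\eqref{eq:two_sided_orthogonality} is used next. Each compact SVD has orthonormal columns, so $\b{U}_l^{\top}\b{U}_l = \b{I}$ and $\b{U}_s^{\top}\b{U}_s = \b{I}$. Together with $\b{U}_l^{\top}\b{U}_s = \b{0}$, this gives $\b{U}^{\top}\b{U} = \b{I}_{\operatorname{rank}(\b{L})+\operatorname{rank}(\b{S})}$. The same argument gives $\b{V}^{\top}\b{V} = \b{I}$. Thus $\b{U}$ and $\b{V}$ have orthonormal columns, and $\b{\Sigma}$ is diagonal with strictly positive entries, because compact SVDs keep only nonzero singular values. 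Reordering the diagonal entries of $\b{\Sigma}$ in nonincreasing order, with the same permutation applied to the columns of $\b{U}$ and $\b{V}$, turns $\b{U}\b{\Sigma}\b{V}^{\top}$ into a valid compact SVD of $\b{L}+\b{S}$.

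Both claims follow from this decomposition.
\begin{itemize}
\item \emph{Rank.} $\operatorname{rank}(\b{L}+\b{S}) = \operatorname{rank}(\b{\Sigma}) = \operatorname{rank}(\b{L})+\operatorname{rank}(\b{S})$. This holds because $\b{U}$ and $\b{V}$ have full column rank, so multiplying by them preserves rank.
\item \emph{Singular values.} The nonzero singular values are the diagonal entries of $\b{\Sigma}$, counted with multiplicity. These are exactly the singular values of $\b{L}$ together with those of $\b{S}$, read as a multiset union.
\end{itemize}

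As a check that avoids appealing to uniqueness of the SVD, compute
\begin{align*}
    (\b{L}+\b{S})^{\top}(\b{L}+\b{S}) = \b{V}_l\b{\Sigma}_l^2\b{V}_l^{\top} + \b{V}_s\b{\Sigma}_s^2\b{V}_s^{\top}.
\end{align*}
The cross terms vanish because $\b{U}_l^{\top}\b{U}_s=\b{0}$. The result is a spectral decomposition with orthonormal eigenvectors, since $\b{V}_l^{\top}\b{V}_s=\b{0}$. Its nonzero eigenvalues are therefore the squares of the singular values of $\b{L}$ and $\b{S}$.

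The main obstacle is bookkeeping rather than conceptual. First, "union" has to be read as a multiset union, since $\b{L}$ and $\b{S}$ may share singular values. Second, both one-sided orthogonality conditions are genuinely needed. Column orthogonality alone kills the cross terms in the Gram matrix, but it does not make the eigenvector blocks orthogonal. Row orthogonality alone fails for the symmetric reason. Finally, the degenerate cases $\b{L}=\b{0}$ or $\b{S}=\b{0}$, where a compact SVD has empty factors, are trivial and can be handled separately.
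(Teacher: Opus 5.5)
Your proposal is correct and follows the same route as the paper. You concatenate the compact-SVD factors into $[\b{U}_l,\b{U}_s]$ and $[\b{V}_l,\b{V}_s]$, use the two-sided orthogonality to make both orthonormal, and read off the rank and singular values from the block-diagonal middle factor; your Gram-matrix check and your remarks on the multiset union and the degenerate cases are sound additions that the paper leaves implicit.
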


\begin{proof}
Under Eq.~\eqref{eq:two_sided_orthogonality}, the concatenated matrices $[\b{U}_l, \b{U}_s]$ and $[\b{V}_l, \b{V}_s]$ have orthonormal columns. Therefore:
\begin{align*}
    \b{L}+\b{S}
    &=
    \begin{bmatrix}
        \b{U}_l & \b{U}_s
    \end{bmatrix}
    \begin{bmatrix}
        \b{\Sigma}_l & \b{0}\\
        \b{0} & \b{\Sigma}_s
    \end{bmatrix}
    \begin{bmatrix}
        \b{V}_l & \b{V}_s
    \end{bmatrix}^{\top}.
\end{align*}
The middle matrix is block diagonal. Its nonzero singular values are
therefore the union of those of $\b{\Sigma}_l$ and $\b{\Sigma}_s$, and
its rank is the sum of their ranks.
\end{proof}

This proposition describes an idealized case of complete complementarity.
In practice, the two components need not be exactly orthogonal. 
Their degree of overlap can be quantified as explained in the following. 
The part of $\b{L}$ lying in both the row and column singular spaces of $\b{S}$ is $\b{U}_s \b{U}_s^\top \b{L} \b{V}_s \b{V}_s^\top$. The part of $\b{L}$ not represented by the singular directions of $\b{S}$ is $\b{L} - \b{U}_s \b{U}_s^\top \b{L} \b{V}_s \b{V}_s^\top$. According to orthogonal projection in the Frobenius inner-product space, we define:
\begin{align}\label{equation_nu_L_S}
&\nu_{L|S} = 1 - \frac{\|\b{U}_s \b{U}_s^\top \b{L} \b{V}_s \b{V}_s^\top\|_F^2}{\|\b{L}\|_F^2},
\end{align}
as the measurement of the part of $\b{L}$ not shared with $\b{S}$. 
Likewise, the measurement of the part of $\b{S}$ not shared with $\b{L}$ can be formulated as:
\begin{align}\label{equation_nu_S_L}
&\nu_{S|L} = 1 - \frac{\|\b{U}_l \b{U}_l^\top \b{S} \b{V}_l \b{V}_l^\top\|_F^2}{\|\b{S}\|_F^2}.
\end{align}
High values for Eqs. (\ref{equation_nu_L_S}) and (\ref{equation_nu_S_L}) indicate that the two components of LoRSA have captured distinct valuable information.

\subsection{Theory of Domain Generalization by LoRSA}
\label{sec:domain_generalization_theory}

Domain generalization aims to learn from one or more observed source domains while generalizing to target domains that are unavailable during training \cite{muandet2013domain,wang2022generalizing}. Therefore, LoRSA cannot directly optimize a discrepancy involving a particular unseen target distribution. Its potential benefit instead follows from the inductive bias imposed on the task-specific parameter update.

Let $\Delta\b{W}^{\star}_d$ denote an ideal task-specific update for domain $d$. Motivated by invariant-representation approaches to domain generalization~\cite{muandet2013domain}, we make the following structural assumption:
\begin{align}
    \Delta\b{W}^{\star}_d
    &=
    \underbrace{
    \b{L}^{\star}+\b{S}^{\star}
    }_{\text{domain-stable task component}}
    +
    \underbrace{
    \b{N}_d
    }_{\text{domain-specific nuisance}}.
    \label{eq:domain_update_decomposition}
\end{align}
Here, $\b{L}^{\star}$ represents dominant low-rank structure shared across domains, $\b{S}^{\star}$ represents structured task-relevant residual directions that are also stable across domains, and $\b{N}_d$ contains domain-dependent variation. 

On the one hand, to connect parameter approximation to the task objective, let $\mathcal{J}_d(\Delta\b{W})$ denote the population loss in domain $d$. If $\mathcal{J}_d$ is twice differentiable and $\Delta\b{W}^{\star}_d$ is a local minimizer, a second-order Taylor expansion gives:
\begin{align}
    \mathcal{J}_d(&\Delta\b{W})
    =
    \mathcal{J}_d(\Delta\b{W}^{\star}_d) \nonumber
    \\
    &+
    \frac{1}{2}
    \operatorname{vec}
    \left(
    \Delta\b{W}-\Delta\b{W}^{\star}_d
    \right)^{\top}
    \b{H}_d
    \operatorname{vec}
    \left(
    \Delta\b{W}-\Delta\b{W}^{\star}_d
    \right)
    \nonumber\\
    &+
    o\left(
    \left\|
    \Delta\b{W}-\Delta\b{W}^{\star}_d
    \right\|_F^2
    \right),
    \label{eq:local_quadratic_loss}
\end{align}
where $\b{H}_d$ is the Hessian evaluated at $\Delta\b{W}^{\star}_d$ and $o(.)$ denotes the small-O notation in complexity. 
If the Hessian eigenvalues satisfy:
\begin{align}
    \mu_d\b{I}
    \preceq
    \b{H}_d
    \preceq
    \beta_d\b{I},
    \label{eq:hessian_bounds}
\end{align}
for some $0<\mu_d\leq\beta_d$, then Eq.~\eqref{eq:local_quadratic_loss} implies that, locally, there is:
\begin{align}
    \frac{\mu_d}{2}
    \left\|
    \Delta\b{W}-\Delta\b{W}^{\star}_d
    \right\|_F^2
    &\lesssim
    \mathcal{J}_d(\Delta\b{W})
    -
    \mathcal{J}_d(\Delta\b{W}^{\star}_d)
    \nonumber\\
    &\lesssim
    \frac{\beta_d}{2}
    \left\|
    \Delta\b{W}-\Delta\b{W}^{\star}_d
    \right\|_F^2.
    \label{eq:loss_parameter_error_relation}
\end{align}
Thus, within this local regime, reducing parameter-space approximation error also reduces an upper bound on the excess domain loss.

On the other hand, LoRA searches only within the low-rank class $\mathcal{L}_{r_l}$, whereas LoRSA searches within:
\begin{align}
    \mathcal{H}_{l+s}
    &=
    \mathcal{L}_{r_l}
    +
    \mathcal{S}_{r_s,q_a,q_b},
    \label{eq:dg_lorsa_hypothesis_class}
\end{align}
where $\mathcal{L}_{r_l}$ and $\mathcal{S}_{r_s,q_a,q_b}$ are defined in Eqs. (\ref{eq:lora_class}) and (\ref{eq:roselora_class}), respectively. 
The class in Eq. \eqref{eq:dg_lorsa_hypothesis_class} is the Minkowski sum introduced earlier in Eq. (\ref{eq:combined_class}). If $\b{L}^{\star}\in\mathcal{L}_{r_l}$, $\b{S}^{\star}\in\mathcal{S}_{r_s,q_a,q_b}$, and $\b{S}^{\star}\notin\mathcal{L}_{r_l}$, then a fixed-rank LoRA update cannot generally represent the complete domain-stable component, whereas LoRSA can represent both terms.

\begin{proposition}[Target-domain approximation advantage]
\label{prop:target_domain_approximation}
Consider an unseen target domain $t$ satisfying
Eq. \eqref{eq:domain_update_decomposition}. Assume that:
\begin{align*}
    \b{L}^{\star}
    \in
    \mathcal{L}_{r_l},
    \quad
    \b{S}^{\star}
    \in
    \mathcal{S}_{r_s,q_a,q_b},
\end{align*}
and consider the idealized learned updates:
\begin{align*}
    \Delta\b{W}_{\mathrm{LoRA}}
    =
    \b{L}^{\star},
    \quad
    \Delta\b{W}_{\mathrm{LoRSA}}
    =
    \b{L}^{\star}+\b{S}^{\star}.
\end{align*}
Then, their target-domain approximation errors satisfy:
\begin{align}
    &\varepsilon_{\mathrm{LoRA},t}^{2}
    :=
    \left\|
    \Delta\b{W}^{\star}_t
    -
    \Delta\b{W}_{\mathrm{LoRA}}
    \right\|_F^2
    =
    \left\|
    \b{S}^{\star}+\b{N}_t
    \right\|_F^2,
    \label{eq:lora_target_error}\\
    &\varepsilon_{\mathrm{LoRSA},t}^{2}
    :=
    \left\|
    \Delta\b{W}^{\star}_t
    -
    \Delta\b{W}_{\mathrm{LoRSA}}
    \right\|_F^2
    =
    \left\|
    \b{N}_t
    \right\|_F^2.
    \label{eq:lorsa_target_error}
\end{align}
Moreover, LoRSA has lower target-domain approximation error whenever the following condition is satisfied:
\begin{align}
    \left\langle
    \b{S}^{\star},
    \b{N}_t
    \right\rangle_F
    &>
    -
    \frac{1}{2}
    \left\|
    \b{S}^{\star}
    \right\|_F^2.
    \label{eq:domain_generalization_condition}
\end{align}
\end{proposition}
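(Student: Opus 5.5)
The argument is a direct substitution followed by a Frobenius-norm expansion, in the same spirit as the identity in Eq.~\eqref{eq:improvement_identity}. First, I would instantiate the structural assumption in Eq.~\eqref{eq:domain_update_decomposition} at the target domain $t$, giving $\Delta\b{W}^{\star}_t=\b{L}^{\star}+\b{S}^{\star}+\b{N}_t$.

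Substituting the idealized update $\Delta\b{W}_{\mathrm{LoRA}}=\b{L}^{\star}$ cancels the low-rank term and leaves $\b{S}^{\star}+\b{N}_t$, which yields Eq.~\eqref{eq:lora_target_error}. Substituting $\Delta\b{W}_{\mathrm{LoRSA}}=\b{L}^{\star}+\b{S}^{\star}$ cancels both domain-stable terms and leaves $\b{N}_t$, which yields Eq.~\eqref{eq:lorsa_target_error}. The membership assumptions $\b{L}^{\star}\in\mathcal{L}_{r_l}$ and $\b{S}^{\star}\in\mathcal{S}_{r_s,q_a,q_b}$ are not needed for these identities. Their only role is to guarantee that both idealized updates are feasible, since $\b{L}^{\star}+\b{S}^{\star}\in\mathcal{H}_{l+s}$ by Eq.~\eqref{eq:combined_class} and $\b{L}^{\star}\in\mathcal{L}_{r_l}$. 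I would note this explicitly so the comparison is between admissible members of the two classes.

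For the comparison, I would expand the LoRA error using bilinearity of the Frobenius inner product:
\begin{align*}
\varepsilon_{\mathrm{LoRA},t}^{2}
=\|\b{S}^{\star}\|_F^2+2\langle\b{S}^{\star},\b{N}_t\rangle_F+\|\b{N}_t\|_F^2 .
\end{align*}
Subtracting Eq.~\eqref{eq:lorsa_target_error} gives
\begin{align*}
\varepsilon_{\mathrm{LoRA},t}^{2}-\varepsilon_{\mathrm{LoRSA},t}^{2}
=\|\b{S}^{\star}\|_F^2+2\langle\b{S}^{\star},\b{N}_t\rangle_F .
\end{align*}
This difference is strictly positive exactly when $\langle\b{S}^{\star},\b{N}_t\rangle_F>-\tfrac12\|\b{S}^{\star}\|_F^2$, which is condition~\eqref{eq:domain_generalization_condition}. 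The condition is therefore both sufficient and necessary for strict improvement, and I would remark on this.

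There is no genuine obstacle here; the only point needing care is interpretation. I would add two short remarks. First, when $\b{S}^{\star}\neq\b{0}$ and $\b{N}_t$ is orthogonal to $\b{S}^{\star}$ (for example, nuisance independent of the stable residual), the condition holds automatically. Second, by Cauchy--Schwarz, the condition also holds whenever $\|\b{N}_t\|_F<\tfrac12\|\b{S}^{\star}\|_F$. This connects the result to Eq.~\eqref{eq:loss_parameter_error_relation}, where a smaller parameter error tightens the local upper bound on the excess target loss.
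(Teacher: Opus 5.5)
Your proposal is correct and follows the paper's proof essentially step for step. You substitute $\Delta\b{W}^{\star}_t=\b{L}^{\star}+\b{S}^{\star}+\b{N}_t$ into both error definitions, expand $\|\b{S}^{\star}+\b{N}_t\|_F^2$ with the Frobenius inner-product identity, and observe that the difference $\|\b{S}^{\star}\|_F^2+2\langle\b{S}^{\star},\b{N}_t\rangle_F$ is positive exactly under the stated condition; your extra remarks are also correct (the membership assumptions only ensure feasibility, the orthogonal special case mirrors the paper's own follow-up, and the Cauchy--Schwarz sufficient condition $\|\b{N}_t\|_F<\tfrac12\|\b{S}^{\star}\|_F$ holds).
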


\begin{proof}
Substituting Eq.~\eqref{eq:domain_update_decomposition} into the two error definitions gives Eqs.~\eqref{eq:lora_target_error} and \eqref{eq:lorsa_target_error}. Using the Frobenius inner-product identity \cite{golub2013matrix}:
\begin{align*}
    \|\b{A}+\b{B}\|_F^2
    &=
    \|\b{A}\|_F^2
    +
    \|\b{B}\|_F^2
    +
    2\langle\b{A},\b{B}\rangle_F,
\end{align*}
we obtain:
\begin{align}
    \varepsilon_{\mathrm{LoRA},t}^{2}
    -
    \varepsilon_{\mathrm{LoRSA},t}^{2}
    &=
    \left\|
    \b{S}^{\star}
    \right\|_F^2
    +
    2
    \left\langle
    \b{S}^{\star},
    \b{N}_t
    \right\rangle_F.
    \label{eq:domain_generalization_improvement}
\end{align}
The right-hand side is positive exactly when
Eq.~\eqref{eq:domain_generalization_condition} holds.
\end{proof}

A particularly interpretable special case occurs when the stable structured component is orthogonal to the target-domain nuisance, i.e., the stable residual structure does not align with target-specific nuisance directions:
\begin{align}
    \left\langle
    \b{S}^{\star},
    \b{N}_t
    \right\rangle_F
    &=
    \b{0}.
    \label{eq:stable_nuisance_orthogonality}
\end{align}
Under this assumption, Eq. (\ref{eq:domain_generalization_improvement}) in Proposition \ref{prop:target_domain_approximation} reduces to:
\begin{align}
    \varepsilon_{\mathrm{LoRA},t}^{2}
    -
    \varepsilon_{\mathrm{LoRSA},t}^{2}
    &=
    \left\|
    \b{S}^{\star}
    \right\|_F^2
    >
    0.
    \label{eq:orthogonal_domain_improvement}
\end{align}

Combining Proposition~\ref{prop:target_domain_approximation} with the local quadratic relation in Eq.~\eqref{eq:loss_parameter_error_relation} shows that, under the stated smoothness and Hessian assumptions, a lower target-domain parameter approximation error translates into a lower local upper bound on the target-domain excess loss. Hence, LoRSA can improve domain generalization when the structured-sparse component captures domain-stable residual directions omitted by the selected LoRA rank.

\subsection{Distinction from LoRA, RoseLoRA, and RoSA}
\label{sec:comparison_with_existing_adapters}

LoRSA is related to LoRA \cite{hu2022lora}, RoseLoRA \cite{wang2024roselora}, and RoSA \cite{nikdan2024rosa}, but differs from each method
in its parameterization and in the structural assumptions imposed on the
task-specific update. To make these differences explicit, we state the
adapted weight matrix of each method using a common notation.

\subsubsection{Comparison with LoRA}

LoRA \cite{hu2022lora} models the task-specific update using a single dense low-rank
component:
\begin{align}
    \b{W}_{\mathrm{LoRA}}
    &=
    \b{W}_0
    +
    \frac{\alpha_l}{r_l}
    \b{B}_l\b{A}_l,
    \label{eq:lora_comparison}
\end{align}
where
$\b{A}_l\in\mathbb{R}^{r_l\times d_{\mathrm{in}}}$ and
$\b{B}_l\in\mathbb{R}^{d_{\mathrm{out}}\times r_l}$. Consequently:
\begin{align*}
    \operatorname{rank}
    \left(
    \b{W}_{\mathrm{LoRA}}-\b{W}_0
    \right)
    &\leq r_l.
\end{align*}
Although the update is low rank, it is generally dense. LoRA therefore
assumes that the task-specific adaptation can be represented sufficiently
well by a single globally coordinated low-dimensional component.

In contrast, LoRSA augments the dense low-rank component with a second,
structured-sparse component:
\begin{align}
    \b{W}_{\mathrm{LoRSA}}
    &=
    \b{W}_0
    +
    \frac{\alpha_l}{r_l}
    \b{B}_l\b{A}_l
    +
    \frac{\alpha_s}{r_s}
    \b{B}_s\b{A}_s.
    \label{eq:LoRSA_comparison}
\end{align}
Therefore, LoRA is recovered as a special case of LoRSA by setting
$\b{B}_s\b{A}_s=\b{0}$. The additional component allows LoRSA to represent structured residual corrections that may not be represented efficiently by the selected LoRA rank.

\subsubsection{Comparison with RoseLoRA}

RoseLoRA \cite{wang2024roselora} uses a single structured-sparse low-rank update:
\begin{align}
    \b{W}_{\mathrm{RoseLoRA}}
    &=
    \b{W}_0
    +
    \frac{\alpha_s}{r_s}
    \b{B}_s\b{A}_s,
    \label{eq:roselora_comparison}
\end{align}
where row-wise sparsity is imposed on $\b{A}_s$ and column-wise sparsity
is imposed on $\b{B}_s$. In particular:
\begin{align}
    &\left\|(\b{A}_s)_{k:}\right\|_0
    \leq
    q_a d_{\mathrm{in}},
    \quad
    \left\|(\b{B}_s)_{:k}\right\|_0
    \leq
    q_b d_{\mathrm{out}},
    \quad
    k=1,\ldots,r_s.
    \label{eq:roselora_comparison_constraints}
\end{align}
Its effective update is simultaneously low rank and structured sparse:
\begin{align*}
    \operatorname{rank}(\b{B}_s\b{A}_s)
    &\leq r_s,\\
    \operatorname{supp}(\b{B}_s\b{A}_s)
    &\subseteq
    \bigcup_{k=1}^{r_s}
    \mathcal{I}_k\times\mathcal{J}_k,
\end{align*}
where
$\mathcal{I}_k=\operatorname{supp}((\b{B}_s)_{:k})$ and
$\mathcal{J}_k=\operatorname{supp}((\b{A}_s)_{k:})$.

RoseLoRA alone does not contain a separate dense low-rank component.
Consequently, all task-specific changes must be represented through the
structured-sparse factors. LoRSA retains the structured sparse component but combines it with a conventional dense low-rank component. RoseLoRA is therefore recovered as a special case of LoRSA by setting $\b{B}_l\b{A}_l=\b{0}$. The proposed LoRSA method separates the adaptation into two complementary components:
\begin{align*}
    \underbrace{\b{L}}_{\text{dense low-rank component}}
    \qquad\text{and}\qquad
    \underbrace{\b{S}}_{\text{structured-sparse low-rank component}}.
\end{align*}

\subsubsection{Comparison with RoSA}

RoSA \cite{nikdan2024rosa} is also motivated by the low-rank-plus-sparse decomposition of RPCA and parameterizes the adapted weight matrix as:
\begin{align}
    \b{W}_{\mathrm{RoSA}}
    &=
    \b{W}_0
    +
    \b{L}_{\mathrm{RoSA}}
    +
    \b{S}_{\Omega},
    \label{eq:rosa_comparison}
\end{align}
where $\b{L}_{\mathrm{RoSA}}$ is a low-rank matrix and
$\b{S}_{\Omega}$ is a directly parameterized sparse matrix whose nonzero
entries are restricted to a support set $\Omega$. A representative
parameterization is:
\begin{align*}
    &\b{L}_{\mathrm{RoSA}}
    =
    \b{B}\b{A}, \\
    &\operatorname{supp}(\b{S}_{\Omega})
    \subseteq \Omega.
\end{align*}
The sparse component $\b{S}_{\Omega}$ is not required to admit a low-rank
factorization. It may therefore have rank as large as
$\min\{d_{\mathrm{in}},d_{\mathrm{out}}\}$, despite containing relatively
few nonzero entries.

LoRSA differs from RoSA in the representation of the sparse component.
Instead of directly learning an unrestricted sparse matrix, LoRSA defines:
\begin{align}
    \b{S}_{\mathrm{LoRSA}}
    &=
    \frac{\alpha_s}{r_s}
    \b{B}_s\b{A}_s,
    \label{eq:LoRSA_sparse_component_comparison}
\end{align}
where $\b{A}_s$ and $\b{B}_s$ satisfy row- and column-wise sparsity
constraints. Hence:
\begin{align}
    \operatorname{rank}(\b{S}_{\mathrm{LoRSA}})
    &\leq r_s,
    \label{eq:LoRSA_sparse_rank_comparison}\\
    \operatorname{supp}(\b{S}_{\mathrm{LoRSA}})
    &\subseteq
    \bigcup_{k=1}^{r_s}
    \mathcal{I}_k\times\mathcal{J}_k.
    \label{eq:LoRSA_sparse_support_comparison}
\end{align}
The sparse component of LoRSA is thus constrained in two ways: it has a bounded rank and its support is induced by unions of row-column Cartesian products.
By contrast, the sparse component of RoSA is directly parameterized on an arbitrary sparse support and is not constrained to be low rank.

This distinction changes the inductive bias of the two methods. RoSA seeks to approximate a full-fine-tuning update using the classical RPCA-inspired combination of a low-rank matrix and an unrestricted highly sparse matrix.
LoRSA instead combines a dense low-rank update with a structured-sparse
low-rank correction. Thus, LoRSA is more restrictive than RoSA in the class of sparse matrices it can represent, but it preserves a factorized low-rank parameterization for both adaptation components.
Moreover, in contrast to RoSA, the locations of the nonzero entries in the structured sparse component of LoRSA are not fixed during training.

The four methods can be summarized as:
\begin{align}
    \Delta\b{W}_{\mathrm{LoRA}}
    &=
    \b{L},
    \label{eq:summary_lora}\\
    \Delta\b{W}_{\mathrm{RoseLoRA}}
    &=
    \b{S}_{\mathrm{slr}},
    \label{eq:summary_roselora}\\
    \Delta\b{W}_{\mathrm{RoSA}}
    &=
    \b{L}+\b{S}_{\mathrm{direct}},
    \label{eq:summary_rosa}\\
    \Delta\b{W}_{\mathrm{LoRSA}}
    &=
    \b{L}+\b{S}_{\mathrm{slr}},
    \label{eq:summary_LoRSA}
\end{align}
where $\b{S}_{\mathrm{direct}}$ denotes a directly parameterized sparse
matrix and $\b{S}_{\mathrm{slr}}$ denotes a structured-sparse low-rank
matrix. The defining distinction of LoRSA is therefore not merely the
addition of low-rank and sparse updates, but the combination of a dense
low-rank component with a RoseLoRA-based structured-sparse low-rank
component.

\section{Experimental Results}

\subsection{Datasets and the Foundation Model}
We utilized three distinct mammography datasets in this study: VinDr-Mammo \cite{nguyen2023vindr} (Vietnam), RSNA \cite{rsna_smbc_2024} (USA and Australia), and MammosighTR \cite{kocc2025mammosightr} (Turkey). These cohorts were specifically selected because they offer compatible four-category breast density annotations while encompassing varied geographic populations, healthcare systems, and imaging environments. Consequently, this selection facilitated a robust evaluation of both \textit{in-domain performance} and \textit{cross-domain generalization}.

VinDr-Mammo was used as the source-domain dataset. It contains 5,000 four-view mammography examinations collected from two hospitals in Vietnam, with an official split of 4,000 training and 1,000 test examinations. The training partition was further divided at the patient level into training and validation subsets for model optimization and selection, while the official test set was kept fully held out for internal evaluation.

RSNA and MammosighTR were treated as unseen target domains and were not used during training, validation, or hyperparameter selection. Although the RSNA challenge included hidden public and private test cohorts, the publicly distributed test folder contains only a small example set. Therefore, the entire labeled RSNA training release was used exclusively for external evaluation. MammosighTR provides predefined training and test partitions, and only its official test set was used for external testing.

We used DINOv3-Base, corresponding to the ViT-B/16 architecture, as the common vision foundation model backbone for all experiments. DINOv3 is a self-supervised model pretrained on the large-scale LVD-1689M image corpus to learn transferable global and dense visual representations \cite{simeoni2025dinov3}. The Base variant provides a strong yet computationally practical backbone for evaluating parameter-efficient adaptation.

\subsection{Internal Performance and External Generalization}
\label{sec:internal_external_generalization}

For the experiments, we used the DINOv3-Base vision foundation model \cite{simeoni2025dinov3} as a frozen baseline and adapted it using LoRA, RoSE-LoRA, RoSA, and our proposed method. For the single-component methods, namely LoRA and RoSE-LoRA, in addition to rank $8$, we also evaluated a rank of $16$ to ensure a fair comparison with the dual-component approaches.

Tables~\ref{tab:vindr_internal_validation}--\ref{tab:rsna_external_dino_base}
reveal a clear distinction between source-domain performance and
generalization to unseen cohorts. On the VinDr-Mammo internal evaluation
set, several baselines remain competitive with the proposed method.
RoseLoRA with rank $16$ obtains the highest accuracy, weighted F1, and
macro-AUROC, while RoSA achieves the highest macro-F1. The proposed
LoRA--RoseLoRA adapter instead obtains the highest balanced accuracy and
sensitivity and remains close to the best-performing baseline on the other
metrics. Thus, the internal results do not indicate uniform dominance by
one adaptation method.

The difference becomes clearer on the external cohorts. On MammosighTR,
the proposed method achieves the best result for every reported metric,
including an accuracy of $0.6656$, balanced accuracy of $0.5011$,
macro-F1 of $0.5150$, and QWK of $0.6855$. Similarly, on RSNA, it
achieves the highest accuracy, balanced accuracy, macro-F1, weighted F1,
macro-AUROC, and QWK. The external improvements are particularly evident
in the class-level metrics. Relative to the strongest competing result,
the proposed method improves macro-F1 from $0.4935$ to $0.5150$ on
MammosighTR and from $0.5536$ to $0.5845$ on RSNA. Therefore, although
the baseline methods can match or exceed the proposed method on individual
internal metrics, the proposed parameterization provides the most
consistent performance after transfer to the two unseen cohorts. This
pattern supports the intended role of LoRSA as a generalization-oriented
adapter rather than simply an adapter optimized for the source validation
set.

The behavior of RoSA further illustrates the importance of how the sparse
adaptation component is constructed. RoSA performs strongly on RSNA,
where it is the second-best method for most metrics and achieves a
specificity of $0.8881$, marginally higher than that of the proposed
method. However, its advantage is less consistent on MammosighTR, where
rank-$16$ LoRA outperforms RoSA on all reported metrics. One plausible
explanation is the use of a static sparse mask in RoSA, which is generated
from a small calibration subset before the main adapter training. Such a
mask may identify sparse coordinates that are useful for the source domain
and for some external shifts, while excluding other low-amplitude or subtle
task-relevant directions needed under a different external distribution.
This interpretation is consistent with RoSA's strong performance on RSNA
but weaker relative performance on MammosighTR. Nevertheless, the current
results do not directly identify which representation directions are
omitted by the mask, and this mechanism should therefore be viewed as a
plausible explanation rather than a causal conclusion.

The rank ablation also shows that the effect of increasing adapter capacity
depends on the structure of the adapter. Increasing the LoRA rank from
$8$ to $16$ substantially improves its performance on both external
cohorts. On MammosighTR, balanced accuracy increases from $0.4309$ to
$0.4883$, while macro-F1 increases from $0.3974$ to $0.4935$. On
RSNA, the corresponding metrics increase from $0.4402$ to $0.5070$
and from $0.4340$ to $0.5211$, respectively. These results suggest
that rank-$8$ LoRA is capacity-limited for this task and that increasing
its rank allows it to capture additional task-relevant directions without
producing a corresponding loss of external generalization.

The opposite trend is observed for RoseLoRA. Increasing its rank from
$8$ to $16$ improves several internal metrics, including accuracy and
macro-AUROC, but degrades nearly all external metrics on both MammosighTR
and RSNA. For example, its MammosighTR macro-F1 decreases from $0.4426$
to $0.3868$, and its RSNA accuracy decreases from $0.7024$ to
$0.6525$. This divergence between internal and external performance
suggests that the higher-rank structured-sparse adapter may overestimate
the complexity required for the sparse component. Increasing the rank
increases the number of sparse rank-one components available to the
adapter and therefore enlarges the set of structured corrections that it
can represent. While this additional capacity can improve fitting of the
source-domain data, it may also allow the sparse component to capture
source-specific residual structure that does not transfer to unseen
cohorts.

The proposed method provides a more favorable allocation of the available
capacity. Its two rank-$8$ components have the same total nominal rank as
a rank-$16$ single-component adapter, yet it consistently outperforms both
rank-$16$ LoRA and rank-$16$ RoseLoRA on the external evaluations.
This comparison indicates that the external improvements cannot be
attributed only to a larger nominal rank. Rather, the results suggest that
separating the adaptation into a dense low-rank component and a
structured-sparse low-rank component provides a more effective balance:
the dense component captures broadly shared task structure, while the
structured component contributes additional corrections without requiring
the entire adaptation to follow a single dense or sparse parameterization.

Overall, the experiments show that strong internal performance does not
necessarily translate into reliable external performance. The proposed
method is not uniformly superior on the internal evaluation, but it is
the only method that achieves the best performance consistently
across both external cohorts. Together with the contrasting rank behavior
of LoRA and RoseLoRA and the cohort-dependent performance of RoSA, these
results support the central hypothesis that both the capacity and the
structural organization of the adaptation update influence
out-of-domain generalization.

\subsection{Weight-Matrix Complementarity Analysis}
\label{sec:weight_matrix_complementarity}

To examine whether the dense low-rank and structured-sparse
components learn complementary weight updates, we evaluated the
non-shared-energy measures in Eqs.~(\ref{equation_nu_L_S}) and~(\ref{equation_nu_S_L}) for the adapted
weight matrices. For every matrix, we reconstructed the effective
updates \(L\) and \(S\), computed their compact singular value
decompositions, and measured the fractions of their Frobenius
energies that could not be represented within the bilateral
row-and-column singular subspace of the other component.

The measurements are as follows:
\begin{align}
&\nu_{L|S} = 0.923407 \pm 0.084794, \\
&\nu_{S|L} = 0.926109 \pm 0.076872,
\end{align}
where the reported values are the mean and standard deviation
across the adapted weight matrices. Thus, on average,
approximately \(92.34\%\) of the energy of the dense low-rank
component lies outside the bilateral singular subspace of the
structured-sparse component. Conversely, approximately \(92.61\%\)
of the energy of the structured-sparse component lies outside the
corresponding singular subspace of the dense component.

Equivalently, the fractions of energy represented within the shared
bilateral singular subspaces are:
\begin{align}
&1-\nu_{L|S} = 0.076593, \\
&1-\nu_{S|L} = 0.073891,
\end{align}
for \(L\) and \(S\), respectively. 
Therefore, only approximately \(7.7\%\) of the energy
of either component is represented through the row and column
singular directions of the other component. The similar values of
the two directional measurements also indicate that the result is
not caused by one component being largely contained in the
subspace of the other. Instead, both components contain substantial
energy outside the singular geometry learned by their counterpart.
Hence, the two
components act as complementary weight-update paths rather than
redundant parameterizations of the same update.

\begin{table*}[t]
\centering
\caption{Internal evaluation of parameter-efficient adaptation methods
for four-class breast density classification on the VinDr-Mammo dataset.\\}
\label{tab:vindr_internal_validation}

\setlength{\tabcolsep}{3.2pt}
\renewcommand{\arraystretch}{1.14}
\scriptsize

\begin{threeparttable}
\begin{tabular}{
    @{}
    l
    c
    c c c c c c c c
    @{}
}
\toprule
\multirow{2}{*}{Method}
&
\multirow{2}{*}{$r$}
&
\multicolumn{8}{c}{Internal Validation Performance}
\\
\cmidrule(lr){3-10}

&
&
\makecell{Acc.}
&
\makecell{Bal. Acc.}
&
\makecell{Macro-F1}
&
\makecell{Weighted-F1}
&
\makecell{Macro-AUROC}
&
\makecell{QWK}
&
\makecell{Sens.}
&
\makecell{Spec.}
\\
\midrule

DINO (frozen)
& --
& 0.8105
& 0.5573
& 0.5283
& 0.8126
& 0.8994
& 0.6310
& 0.5573
& 0.8870
\\

\addlinespace[1.5pt]
DINO-Base + LoRA
& 8
& 0.8460
& 0.5497
& 0.5844
& 0.8368
& 0.9317
& 0.6478
& 0.5497
& 0.8784
\\

DINO-Base + LoRA
& 16
& 0.8145
& \underline{0.6415}
& 0.6020
& 0.8243
& \underline{0.9453}
& \underline{0.6809}
& \underline{0.6415}
& \textbf{0.9137}
\\

\addlinespace[1.5pt]
DINO-Base + RoseLoRA
& 8
& 0.8348
& 0.5950
& 0.5825
& 0.8379
& 0.9387
& \textbf{0.6811}
& 0.5950
& 0.9048
\\

DINO-Base + RoseLoRA
& 16
& \textbf{0.8520}
& 0.5692
& 0.6074
& \textbf{0.8444}
& \textbf{0.9471}
& 0.6653
& 0.5692
& 0.8838
\\

\addlinespace[1.5pt]
DINO-Base + RoSA
& 8
& \underline{0.8485}
& 0.5899
& \textbf{0.6233}
& \underline{0.8425}
& 0.9428
& 0.6674
& 0.5899
& 0.8861
\\

\midrule
\textbf{DINO-Base + LoRSA (Ours)}
& \textbf{8}
& 0.8168
& \textbf{0.6523}
& \underline{0.6201}
& 0.8257
& 0.9433
& 0.6785
& \textbf{0.6523}
& \underline{0.9096}
\\

\bottomrule
\end{tabular}

\begin{tablenotes}[flushleft]
\footnotesize
\item Acc.: accuracy; Bal. Acc.: balanced accuracy; QWK: quadratic
weighted Cohen's kappa; Sens.: sensitivity; Spec.: specificity.
\item Macro-AUROC denotes the macro-averaged one-vs-rest area under the
receiver operating characteristic curve.
\item Sensitivity and specificity were macro-averaged across the four
density categories using a one-vs-rest formulation.
\item QWK was calculated by treating breast density categories A--D as
ordered classes.
\item Boldface and underlining indicate the best and second-best results,
respectively.
\end{tablenotes}
\end{threeparttable}
\end{table*}

\begin{table*}[t]
\centering
\caption{External evaluation of parameter-efficient adaptation methods
using the DINO-Base backbone for four-class breast density classification
on the MammosighTR test set.\\}
\label{tab:mammosight_external_dino_base}

\setlength{\tabcolsep}{3.2pt}
\renewcommand{\arraystretch}{1.14}
\scriptsize

\begin{threeparttable}
\begin{tabular}{
    @{}
    l
    c
    c c c c c c c c
    @{}
}
\toprule
\multirow{2}{*}{Method}
&
\multirow{2}{*}{$r$}
&
\multicolumn{8}{c}{External Test Performance}
\\
\cmidrule(lr){3-10}

&
&
\makecell{Acc.}
&
\makecell{Bal. Acc.}
&
\makecell{Macro-F1}
&
\makecell{Weighted-F1}
&
\makecell{Macro-AUROC}
&
\makecell{QWK}
&
\makecell{Sens.}
&
\makecell{Spec.}
\\
\midrule

DINO-Base (frozen)
& --
& 0.6235
& 0.4390
& 0.4187
& 0.5655
& 0.8320
& 0.6215
& 0.4390
& 0.8489
\\

\addlinespace[1.5pt]
DINO-Base + LoRA
& 8
& 0.6288
& 0.4309
& 0.3974
& 0.5595
& 0.8825
& 0.6221
& 0.4309
& 0.8513
\\

DINO-Base + LoRA
& 16
& \underline{0.6576}
& \underline{0.4883}
& \underline{0.4935}
& \underline{0.6142}
& \underline{0.9029}
& \underline{0.6742}
& \underline{0.4883}
& \underline{0.8633}
\\

\addlinespace[1.5pt]
DINO-Base + RoseLoRA
& 8
& 0.6485
& 0.4577
& 0.4426
& 0.5897
& 0.8849
& 0.6508
& 0.4577
& 0.8587
\\

DINO-Base + RoseLoRA
& 16
& 0.5829
& 0.4125
& 0.3868
& 0.5194
& 0.8892
& 0.5663
& 0.4125
& 0.8344
\\

\addlinespace[1.5pt]
DINO-Base + RoSA
& 8
& 0.6456
& 0.4730
& 0.4657
& 0.5926
& 0.8992
& 0.6569
& 0.4730
& 0.8590
\\

\midrule
\textbf{DINO-Base + LoRSA (Ours)}
& \textbf{8}
& \textbf{0.6656}
& \textbf{0.5011}
& \textbf{0.5150}
& \textbf{0.6267}
& \textbf{0.9030}
& \textbf{0.6855}
& \textbf{0.5011}
& \textbf{0.8663}
\\

\bottomrule
\end{tabular}

\begin{tablenotes}[flushleft]
\footnotesize

\item The external test set contains 8,000 images: 852, 3,208, 2,876, and 1,064 images from density categories A, B, C, and D, respectively.

\end{tablenotes}
\end{threeparttable}
\end{table*}

\begin{table*}[t]
\centering
\caption{External evaluation of parameter-efficient adaptation methods
using the DINO-Base backbone for four-class breast density classification
on the RSNA dataset.\\}
\label{tab:rsna_external_dino_base}

\setlength{\tabcolsep}{3.2pt}
\renewcommand{\arraystretch}{1.14}
\scriptsize

\begin{threeparttable}
\begin{tabular}{
    @{}
    l
    c
    c c c c c c c c
    @{}
}
\toprule
\multirow{2}{*}{Method}
&
\multirow{2}{*}{$r$}
&
\multicolumn{8}{c}{External Test Performance}
\\
\cmidrule(lr){3-10}

&
&
\makecell{Acc.}
&
\makecell{Bal. Acc.}
&
\makecell{Macro-F1}
&
\makecell{Weighted-F1}
&
\makecell{Macro-AUROC}
&
\makecell{QWK}
&
\makecell{Sens.}
&
\makecell{Spec.}
\\
\midrule

DINO-Base (frozen)
& --
& 0.6782
& 0.4454
& 0.4307
& 0.6349
& 0.8315
& 0.6092
& 0.4454
& 0.8644
\\

\addlinespace[1.5pt]
DINO-Base + LoRA
& 8
& 0.6903
& 0.4402
& 0.4340
& 0.6470
& 0.8887
& 0.6262
& 0.4402
& 0.8690
\\

DINO-Base + LoRA
& 16
& 0.7093
& 0.5070
& 0.5211
& 0.6850
& 0.9061
& 0.6628
& 0.5070
& 0.8794
\\

\addlinespace[1.5pt]
DINO-Base + RoseLoRA
& 8
& 0.7024
& 0.4542
& 0.4515
& 0.6627
& 0.8871
& 0.6397
& 0.4542
& 0.8745
\\

DINO-Base + RoseLoRA
& 16
& 0.6525
& 0.4383
& 0.4449
& 0.6191
& 0.8893
& 0.5978
& 0.4383
& 0.8543
\\

\addlinespace[1.5pt]
DINO-Base + RoSA
& 8
& \underline{0.7280}
& \underline{0.5371}
& \underline{0.5536}
& \underline{0.7125}
& \underline{0.9074}
& \underline{0.7015}
& \underline{0.5371}
& \textbf{0.8881}
\\

\midrule
\textbf{DINO-Base + LoRSA (Ours)}
& \textbf{8}
& \textbf{0.7301}
& \textbf{0.5518}
& \textbf{0.5845}
& \textbf{0.7159}
& \textbf{0.9113}
& \textbf{0.7042}
& \textbf{0.5518}
& \underline{0.8876}
\\

\bottomrule
\end{tabular}

\begin{tablenotes}[flushleft]
\footnotesize

\item The external RSNA test set contains 29,470 images: 3,105, 12,651,
12,175, and 1,539 images from density categories A, B, C, and D,
respectively.

\end{tablenotes}
\end{threeparttable}
\end{table*}

\section{Conclusion}

This work examined a central limitation of low-rank parameter-efficient fine-tuning for biomedical domain generalization. A single low-rank update is efficient, but it also confines all downstream adaptation to one bounded pair of row and column subspaces. In complex medical tasks, that subspace must simultaneously represent globally shared task structure, localized corrections, and source-sensitive residual variation. We described this restriction as the low-rank cage. Sparsifying the same low-rank component does not remove the cage because the update remains rank constrained, and it may weaken the dense global transformation that standard LoRA captures effectively. RoSA correctly recognizes the need for low-rank and sparse components, but a support mask estimated once from a calibration subset cannot adapt as the dense component and task representation evolve.

LoRSA addresses this problem through a global--residual adaptation principle. Each update contains a dense low-rank component for globally coordinated specialization and a dynamically masked structured low-rank component for residual correction. These components are optimized jointly so that both the residual values and their structural allocation can evolve under the downstream objective. The practical use of LoRA-style and RoseLoRA-style factors is a direct realization of this principle, not the conceptual definition of the contribution.

The external experiments support the proposed direction. Using VinDr-Mammo as the source domain, LoRSA achieved the best macro-F1 on both MammosighTR and RSNA, outperforming the strongest baseline by 2.15 and 3.09 percentage points, respectively. Its advantage was more pronounced externally than internally, which is consistent with the goal of preserving transferable task structure rather than maximizing source-domain fit.

The weight-matrix complementarity analysis provides a more precise explanation of
this behavior. Across the adapted weight matrices, the dense
low-rank and structured-sparse components place approximately
\(92\%\) of their respective energies outside one another's
bilateral singular subspaces, indicating that the two components
learn largely complementary weight-update directions.

\bibliography{refs.bib}
\bibliographystyle{ieeetr}

\end{document}